\documentclass[letterpaper,10pt,conference]{ieeeconf}  

\IEEEoverridecommandlockouts                             
\PassOptionsToPackage{export}{adjustbox}
\usepackage{definitions}
\usepackage[T1]{fontenc}
\usepackage{tikz}
\usepackage{flushend}

\usepackage{dsfont}
\usepackage[cal=pxtx,scr=boondox]{mathalpha}
\usepackage{bm}
\usepackage{yhmath}
\usepackage{adjustbox}
\usepackage{mathtools}
\usepackage{amsmath}
\usepackage{amssymb} 
\usepackage[linesnumbered,ruled,vlined]{algorithm2e}
\usepackage{url}
\usepackage{graphics}
\usepackage{graphicx}
\usepackage{epsfig}
\usepackage{textcomp}
\usepackage{cite}
\usepackage{bm}
\usepackage{subcaption}
\usepackage[labelfont=bf]{caption}
\usepackage{booktabs}
\usepackage[english]{babel}
\usepackage[utf8]{inputenc}
\usepackage{theorem}
\usepackage{pifont}
\usepackage{cuted}
\usepackage[dvipsnames]{colortbl,xcolor}
\usepackage{multirow}
\usepackage{siunitx}
\usepackage{gensymb}
\usepackage{stmaryrd}
\usepackage{makecell}
\usepackage{float}
\let\labelindent\relax
\usepackage{enumitem}

\definecolor{kitred}{RGB}{187,25,23}

\newtheorem{proposition}{Proposition}
\usepackage{hyperref}
\usepackage{url}

\title{\LARGE \bf Open-DiffLoco: Open-Source Differentiable Learning for Deployable Blind Quadruped Locomotion}

\author{Martin Opat \\
University of Groningen, Netherlands \\
{\tt\small martinopat.opat@gmail.com}
}

\begin{document}

\maketitle
\thispagestyle{empty}
\pagestyle{empty}

\begin{abstract}
    Developing deployable locomotion policies through conventional reinforcement learning often requires complex reward engineering and expensive training times. While differentiable simulation offers a highly efficient alternative, open-source tools capable of end-to-end transfer of these policies to physical hardware remain limited.
	This paper introduces Open-DiffLoco, an open-source framework for training deployable blind quadruped locomotion policies with differentiable simulation. The framework implements the Short-Horizon Actor-Critic (SHAC) algorithm in MuJoCo XLA (MJX) and trains a proprioceptive policy that transfers to real-world hardware. The deployed policy removes privileged actor observations, including base linear velocity, and does not rely on reference trajectories. It also uses a substantially simplified reward function, enabling the robot to discover walking patterns without the complex auxiliary rewards typically used in conventional reinforcement learning pipelines. When deployed on physical hardware (a Unitree Go2 quadruped), the trained policy tracks omnidirectional velocity commands with root-mean-square error below \SI{0.2}{\meter/\second}, reaches speeds above \SI{1}{\meter/\second}, and remains robust to uneven terrain and external physical disturbances, such as lateral pushes. Across the reported configurations, training uses under 6 GB of VRAM on a single NVIDIA GeForce RTX 5080 GPU and completes in approximately 20--60 minutes. As an algorithmic extension to SHAC, we propose Jacobian-Augmented Value Estimation (JAVE), which supervises the critic Jacobians to improve early first-order policy-gradient training. To our knowledge, Open-DiffLoco is the first open-source framework for training deployable locomotion policies using differentiable simulation. Deployment videos and source code are available at: \href{https://diffloco.martin-opat.com/}{https://diffloco.martin-opat.com/}
\end{abstract}

\begin{keywords}
	Legged robots; Differentiable simulation; Blind locomotion.
\end{keywords}

\section{Introduction}\label{sec:intro}
The interest in embodied artificial intelligence (AI), a physical body controlled by an AI algorithm, is intensified by humankind's passion for automation and technology, but weakened by its fear of a dystopian future~\cite{negativeAttitudesSyrdal2009, RobotHumanGoding2023}.
Nevertheless, embodied AI is an increasingly relevant topic in modern machine learning research.
Recently, legged robots have emerged as versatile platforms capable of traversing challenging environments.
Unlike wheeled embodiments, which require mostly flat terrain, legged robots are suited for stepping over gaps and obstacles, and traversing steep inclines and stairs.
This adaptability makes legged systems heavily researched across various sectors.
Their applications span from agriculture~\cite{agriEx} and search-and-rescue operations in disaster zones~\cite{disaster} to future exploration of extraterrestrial objects~\cite{spaceEx} and many more.

In reinforcement learning (RL), the mapping from observations to actions is called a policy and is typically represented by a neural network. The goal of RL-based locomotion training is to discover an effective policy for a given robotic task. While model-based RL approaches exist~\cite{modelRLsurveyMoerland2022}, modern robot learning predominantly uses model-free RL algorithms, most notably Proximal Policy Optimisation (PPO)~\cite{ppo} implemented in frameworks such as RSL-RL~\cite{schwarke2025rslrl}. Model-free RL treats the simulation environment as a black box and learns through trial-and-error interactions. Lacking access to the underlying physical equations, it assumes a stochastic state-transition model and relies on zeroth-order gradients (ZoG)~\cite{bpttMujocoLuo2024}. Although this model-free paradigm is robust and flexible, it suffers from low sample efficiency due to weakly informative ZoG estimates, leading to prolonged training times and heavy computational resource requirements.

Differentiable simulation addresses these limitations by treating the simulator as a glass box, computing highly informative first-order gradients (FoG) by backpropagating through the environment's dynamics.
While this approach significantly improves sample efficiency, it introduces new constraints: dynamics and reward functions must be differentiable, and noisy gradient calculations frequently cause numerical instability~\cite{metz2022gradientsneed, suh2022differentiablesimulatorsbetterpolicy}.
Consequently, existing literature primarily demonstrates low-speed\footnote{Less than or equal to \SI{1}{\meter\per\second}~\cite{bpttMujocoLuo2024, schwarke2025, deploy2Song2025}.} locomotion policies. Although short-horizon policy learning, most notably the Short-Horizon Actor-Critic (SHAC) algorithm~\cite{shacXu2022}, helps mitigate this instability, broader community adoption remains hindered by the current state of the art relying on closed-source and custom-built frameworks~\cite{schwarke2025, deploy2Song2025, bpttMujocoLuo2024}. To address these issues, this work develops a method for training deployable policies within a mainstream, accessible simulation ecosystem -- MuJoCo XLA (MJX)~\cite{todorov2012mujoco}.

\subsection*{Contributions}
Our end-to-end differentiable framework leverages first-order gradients to train robust, deployable quadrupedal locomotion policies. The core contributions are as follows:
\begin{itemize}[leftmargin=*]
	\item Open-Source Differentiable Framework: We introduce the first open-source framework for training deployable locomotion policies using differentiable simulation, with the Short-Horizon Actor-Critic (SHAC) algorithm implemented in MuJoCo XLA.
	\item SHAC Algorithm Extension: We propose and evaluate Jacobian-Augmented Value Estimation (JAVE), an extension to SHAC that augments critic learning with value-Jacobian supervision to improve early first-order policy-gradient training.
	\item Simplified Reward Structure and Motion Autonomy: We demonstrate that analytical gradients can eliminate the need for reference trajectories and extensive reward engineering; our framework discovers stable walking patterns while removing the complex auxiliary rewards, such as gait pattern rewards, that traditional algorithms like PPO often require for reliable convergence.
	\item Minimalist Observation Space: We remove privileged information, such as base linear velocity, from the actor observations, yielding a robust policy that relies entirely on onboard proprioception.
	\item Real-World Validation: We provide a hardware evaluation demonstrating the efficacy of differentiable simulation for legged robots under omnidirectional velocity tracking, uneven terrain, and external physical disturbances, demonstrated in simulation and on the Unitree Go2 quadruped.
\end{itemize}

\section{Framework Design} \label{sec:framework-design}
\subsection{Framework Modularity}
\begin{figure*}[t]
    \centering
    \includegraphics[width=\linewidth]{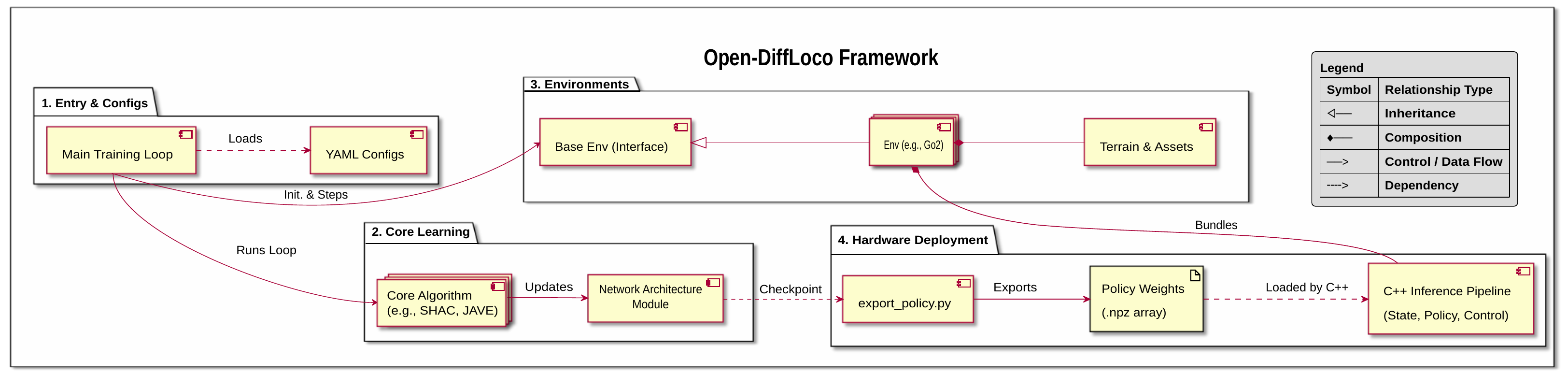}
    \caption{High-level system architecture of the Open-DiffLoco framework.}
    \label{fig:arch}
\end{figure*}
The proposed framework, illustrated in \autoref{fig:arch}, is designed for modularity. As depicted, the system loads a YAML configuration file, which acts as an entry point, allowing users to adapt the framework to different robotic platforms. This configuration file defines embodiment-specific components, such as the reward function and physical structure.
Within the core learning pipeline, both the training algorithm and the network architecture are interchangeable modules. For the algorithm, users can choose between a general SHAC implementation based on the open-source DiffRL framework~\cite{shacXu2022} and alternatives like JAVE (see Section~\ref{subsec: JAVE}). Furthermore, all networks are currently implemented as simple multilayer perceptrons (MLPs), but the modular design allows easy changes to the network architecture. 
Finally, a policy can be deployed to real hardware using a custom, embodiment-specific C++ and ROS2~\cite{ros2} stack.

\subsection{Reward Functions}
Although traditional RL uses gradient-based optimisation, it does not require strict differentiability because the stochastic nature of the algorithms statistically smooths discontinuities and kinks across thousands of parallel environments. In contrast, FoG methods lack this statistical smoothing because they rely on analytical gradients and use over an order of magnitude fewer parallel environments. This absence makes strictly differentiable rewards necessary, posing a notable challenge because many robust and validated rewards used in traditional model-free setups rely on instantaneous peaks or discontinuous formulations. Most notably, rewards that define gait patterns, such as airtime rewards or gait-specific rewards that encourage certain limbs to be airborne simultaneously, frequently rely on non-differentiable step functions.

Current works address this limitation by proposing differentiable equivalents of these rewards. However, this approach often comes with significant downsides, such as introducing additional hyperparameters that can ultimately bottleneck policy performance. A prime example of this is the trotting gait rewards proposed by Schwarke et al.~\cite{schwarke2025} or the kinematic reference used by Luo et al.~\cite{bpttMujocoLuo2024} in their forward locomotion example. Both formulations use a fixed frequency parameter that inherently dictates how often legs should lift, limiting the robot's range of motion and potentially preventing the discovery of high-speed emergent behaviours.

We found that most such rewards are unnecessary, especially for straightforward tasks like blind locomotion. In this work, blind locomotion refers to locomotion without exteroceptive terrain sensing, where the policy must rely on proprioceptive observations available during deployment. By using a minimal subset of differentiable rewards from PPO's standard reward set, policies trained in our framework discover stable gait patterns that yield robust, deployable policies. Crucially, we were unable to replicate these results when using PPO's implementation in MuJoCo Playground~\cite{mujoco_playground_2025} under the same setup.

As explored in~\cite{suh2022differentiablesimulatorsbetterpolicy}, the ZoG estimates used by PPO have high variance but remain unconditionally unbiased, in contrast to the analytical gradients of differentiable simulation, which have low variance but become biased in the presence of discontinuities, such as contacts. This insight explains why PPO struggles to converge to a satisfactory solution under a minimal reward set: the auxiliary shaping rewards that are typically present serve largely to densify the learning signal and suppress the variance of the estimator. Without them, the gradient estimate is too noisy to reliably discover a stable gait. Differentiable simulation, on the other hand, obtains a dense temporal signal directly by backpropagating through the dynamics, so the same information that shapes rewards in PPO is supplied ``for free'' by the analytical gradients. The fact that SHAC succeeds in this setting demonstrates that the blind-locomotion objective is sufficiently smooth for the low bias introduced by contact discontinuities to be outweighed by the substantial reduction in gradient variance. This optimisation advantage allows differentiable simulation to thrive in this simple setting.

The exact reward formulation described in this section is presented in 
\autoref{tab:go2_nolinvel_nokinref_rewards}.
Note that the quadratic tracking rewards can be replaced with a Gaussian formulation, which may be preferable for more advanced tasks.

\subsection{Training Cost}
The computational time complexity for policy training is $\mathcal O(I \cdot N \cdot H)$, where $I$ is the total number of training iterations, $N$ is the number of parallel environments, and $H$ is the unroll horizon length. The (VRAM) memory complexity scales as $\mathcal O(N \cdot H)$, since differentiable simulation requires storing the full computational graph for each environment with all the intermediate states to enable backpropagation.
All presented training results were obtained using a single NVIDIA GeForce RTX 5080 GPU. The entire training process takes as little as $\sim$\SI{20}{\min} and consumes under 6~GB of VRAM out of the 16~GB available, demonstrating that high-performance differentiable policy training is accessible without extensive compute clusters.

\begin{table}[t]
	\centering
	\caption{Simplistic reward terms used to train omnidirectional blind locomotion.}
	\label{tab:go2_nolinvel_nokinref_rewards}
	\begin{tabular}{lll}
		\hline
		\textbf{Term} & \textbf{Expression}                       \\
		\hline
		$x$ velocity tracking
		              & $-(v_x - v_x^{\mathrm{cmd}})^2$
		\\

		$y$ velocity tracking
		              & $-(v_y - v_y^{\mathrm{cmd}})^2$
		\\

		Yaw-rate tracking
		              & $-(\omega_z - \omega_z^{\mathrm{cmd}})^2$
		\\

		Height
		              & $\exp\!\left(-10(h - 0.3)^2\right)$
		\\

		Vertical velocity
		              & $-0.5 v_z^2$
		\\

		Upright
		              & $0.5(-g_z)$
		\\

		Joint deviation
		              & $-0.3 \sum_i (q_i - q_i^{0})^2$
		\\

		Action rate
		              & $-0.02 \sum_i (a_i - a_{i,t-1})^2$
		\\

		Action magnitude
		              & $-0.05 \sum_i a_i^2$
		\\

		Roll/pitch angular velocity
		              & $-0.05(\omega_x^2 + \omega_y^2)$
		\\
		\hline
	\end{tabular}
\end{table}

\subsection{Jacobian-Augmented Value Estimation} \label{subsec: JAVE}
As an algorithmic extension to the baseline framework, we propose Jacobian-Augmented Value Estimation (JAVE).
First-order policy-gradient methods compute the actor update by differentiating through an unrolled simulator trajectory. For a dynamics function $\mathcal F$ and policy $\pi_\theta$, the action $\mathbf{a}_t$ and state $\mathbf{s}_t$ adjoints satisfy
\begin{align}
	\frac{\partial \mathcal L}{\partial a^j_t}
	 & = \frac{\partial l}{\partial a^j_t}
	+ \frac{\partial \mathcal L}{\partial s^i_{t+1}}
	\frac{\partial \mathcal F^i}{\partial a^j_t}\,, \label{eq:app-action-adjoint} \\
	\frac{\partial \mathcal L}{\partial s^i_t}
	 & = \frac{\partial l}{\partial s^i_t}
	+ \frac{\partial \mathcal L}{\partial s^l_{t+1}}
	\frac{\partial \mathcal F^l}{\partial s^i_t}
	+ \frac{\partial \mathcal L}{\partial a^j_t}
	\frac{\partial \pi^j}{\partial s^i_t}\,, \label{eq:app-state-adjoint}
\end{align}
with terminal condition $\partial \mathcal L / \partial \mathbf s_{H+1}=0$ at the trajectory of length $H$, and implicit summation over repeated indices. The resulting actor gradient is
\begin{equation}
	(\nabla_\theta \mathcal L)_k =
	\sum_{t=0}^{H}
	\frac{\partial \mathcal L}{\partial a^j_t}
	\frac{\partial \pi^j}{\partial \theta^k}\,.
	\label{eq:app-bptt-loss-grad}
\end{equation}
Long-horizon backpropagation through contact-rich dynamics is numerically unstable because products of dynamics Jacobians can amplify perturbations. SHAC~\cite{shacXu2022} reduces this issue by truncating the differentiable rollout across $N$ parallel environments to a short horizon $h$, starting from time step $t_0$. The remaining return is replaced by a learned value function $V_\phi$, discounted by a factor $\gamma$:
\begin{align}
	\mathcal L(\theta) =
	\frac{1}{Nh}\sum_{i=1}^{N}
	 & \left[
		\sum_{t=t_0}^{t_0+h-1} \gamma^{t-t_0}
		l(\mathbf s^{(i)}_t,\mathbf a^{(i)}_t)
	\right. \nonumber \\
	 & \left.
		- \gamma^h V_\phi(\mathbf s^{(i)}_{t_0+h})
		\right].
	\label{eq:app-shac-actor}
\end{align}
The critic is trained with a TD-$\lambda$ target $\hat V_t$,
\begin{equation}
	\mathcal L_{\mathrm{critic}}(\phi)
	= \mathbb E_t\left[
		\left\|V_\phi(\mathbf s_t)-\hat V_t\right\|^2
		\right].
	\label{eq:app-shac-critic}
\end{equation}
However, the actor gradient induced by \eqref{eq:app-shac-actor} does not only depend on the value estimate but also on its gradient at the truncation boundary,
\begin{equation}
	\frac{\partial \mathcal L}
	{\partial \mathbf s^{(i)}_{t_0+h}}
	= -\frac{\gamma^h}{Nh}
	\frac{\partial V_\phi}{\partial \mathbf s}
	\bigg|_{\mathbf s^{(i)}_{t_0+h}}\,,
	\label{eq:app-boundary-adjoint}
\end{equation}
which is then propagated backwards through every preceding step in the rollout. A pointwise-accurate critic can therefore still bias the policy update if its local Jacobian is inaccurate. See Appendix~\ref{appendix:grad-unbound-proof} for a full proof.

JAVE addresses this by directly supervising the value-function Jacobian. Let $\mathbf o^c\in\mathcal O_c$ denote the critic observation, $\mathbf o^a\in\mathcal O_a$ the actor observation, and let $\mathcal G_\beta$ denote the one-step observation map under the behaviour policy $\beta$ that generated the sampled transition:
\begin{equation}
	\mathbf o^c_{t+1}=\mathcal G_\beta(\mathbf o^c_t)\,.
\end{equation}
The corresponding value function satisfies the Bellman equation~\cite{BellmanEqSutton1998}
\begin{align}
	V(\mathbf o^c)
	 & = r(\mathbf o^c,\mathcal G_\beta(\mathbf o^c),\mathbf a)
	\nonumber                                                   \\
	 & \quad
	+ \gamma V(\mathcal G_\beta(\mathbf o^c))\,,
	\qquad \mathbf a\sim\beta(\cdot\mid\mathbf o^a)\,,
	\label{eq:app-bellman}
\end{align}
where $r$ is the reward function.
Differentiating \eqref{eq:app-bellman} gives the Bellman-gradient relation
\begin{align}
	(\nabla_{\mathbf o^c}V)_i
	 & = \frac{\partial r}{\partial (o^c)^i}
	+ \frac{\partial r}{\partial (o^{c\prime})^k}
	\frac{\partial \mathcal G_\beta^k}{\partial (o^c)^i}
	\nonumber                                \\
	 & \quad
	+ \gamma
	\frac{\partial \mathcal G_\beta^k}{\partial (o^c)^i}
	\left(
	\left.\nabla_{\mathbf o^c}V\right|_{\mathcal G_\beta(\mathbf o^c)}
	\right)_k\,.
	\label{eq:app-grad-bellman}
\end{align}
This formulation allows asymmetric actor-critic observations. The Bellman-gradient target is constructed in critic-observation space, while the sampled action is initially treated as fixed, meaning the actor may receive a smaller deployment-compatible observation.

The final JAVE critic objective augments the TD-$\lambda$ value loss $\mathcal L_{\mathrm{TD}}$ with gradient Bellman loss $\mathcal L_{\mathrm{GB}}$:
\begin{align}
	\mathcal L_{\mathrm{critic\text{-}JAVE}}(\phi)
	 & = \alpha_{\mathrm{TD}}\mathcal L_{\mathrm{TD}}(\phi)
	+ \alpha_{\mathrm{GB}}\mathcal L_{\mathrm{GB}}(\phi),
	\label{eq:app-jave-loss}
\end{align}
where $\alpha_{\mathrm{TD}}$ and $\alpha_{\mathrm{GB}}$ are scalar weights introduced to balance the two losses.
The two loss components are defined using the stop-gradient operator $\mathrm{sg}\{\cdot\}$
\begin{align}
	\mathcal L_{\mathrm{TD}}(\phi)
	 & = \mathbb E_t\left[
	\left\|V_\phi(\mathbf o^c_t)
	-\mathrm{sg}\{\hat V_t^{\mathrm{TD}(\lambda)}\}
	\right\|^2
	\right], \label{eq:app-jave-td} \\
	\mathcal L_{\mathrm{GB}}(\phi)
	 & = \mathbb E_t\left[
	\left\|
	\nabla_{\mathbf o^c}V_\phi\big|_{\mathbf o^c_t}
	-\mathrm{sg}\{\hat\zeta_t\}
	\right\|^2
	\right]. \label{eq:app-jave-gb}
\end{align}
The TD term fixes the value scale, while the Bellman-gradient term fixes the local geometry of the critic.

Since the observation function is non-invertible, the true map $\mathcal G_\beta$ is not available in closed form. JAVE therefore estimates the observation-space dynamics with a learned dynamics model $\mathcal F_\psi$. The model is purely geometric and predicts only the critic-observation residual,
\begin{equation}
	\mathcal F_\psi(\mathbf o^c,\mathbf a)
	\approx
	\mathbf o^{c\prime}-\mathbf o^c,
	\label{eq:app-jave-model}
\end{equation}
with the reward used in the target constructed analytically from the critic observations and action. For the sampled executed action $\mathbf a_t\sim\beta(\cdot\mid\mathbf o^a_t)$, define $\bar{\mathbf a}_t=\mathrm{sg}\{\mathbf a_t\}$ and
\begin{equation}
	\hat{\mathcal G}_\beta(\mathbf o^c_t,\bar{\mathbf a}_t)
	= \mathbf o^c_t + \mathcal F_\psi(\mathbf o^c_t,\bar{\mathbf a}_t).
	\label{eq:app-wm-closed-loop}
\end{equation}
Using a target critic $V_{\bar\phi}$, define
\begin{align}
	\hat{\mathbf o}^c_{t+1}
	 & = \hat{\mathcal G}_\beta(\mathbf o^c_t,\bar{\mathbf a}_t),    \\
	\hat r_t
	 & = r(\mathbf o^c_t,\hat{\mathbf o}^c_{t+1},\bar{\mathbf a}_t).
\end{align}
The JAVE gradient target then is
\begin{equation}
	(\hat\zeta_t)_i =
	\frac{\partial}{\partial (o^c)^i}
	\left[\hat r_t + \gamma V_{\bar\phi}(\hat{\mathbf o}^c_{t+1})\right]\,.
	\label{eq:app-gb-target-wm}
\end{equation}
Additional remarks and theoretical derivations are presented in Appendix~\ref{app:jave}.

\section{Evaluation in Simulation}\label{sec:ablation}
The policies are evaluated in MJX using a simulated Unitree Go2 quadruped. The control policy operates at \SI{50}{\hertz}, while the underlying physics engine is stepped at \SI{250}{\hertz}, executing 5 simulation substeps per control action. 
The complete parameterisation of the training environment -- including domain randomisation bounds, reward scales, and exact observation histories -- is available in the open-source repository \href{https://github.com/MartinOpat/open-diffloco}{https://github.com/MartinOpat/open-diffloco}.

We evaluate policies along two implementation axes: whether the actor receives a base linear velocity, which requires an onboard velocity estimator during deployment, and whether training uses a kinematic reference, inspired by the residual training setup of Luo et al.~\cite{bpttMujocoLuo2024}. The main deployed policy removes both to showcase flexibility. However, to isolate the optimiser's behaviour, the evaluated FoG methods retain the velocity estimator and kinematic reference parameterisations, as detailed in \autoref{sec:framework-design}. Consequently, in 
\autoref{fig:method-comparison}, PPO uses its native reward set and the FoG methods use the aforementioned reference-based parameterisations, so the figure should be read as a comparison of learning trends and sample efficiency rather than as an exact comparison of absolute reward scale.
PPO, SHAC, and JAVE are compared in 
\autoref{fig:method-comparison} in terms of both per-iteration and per-environment-step efficiency. The total number of environment steps is determined by $N_\text{iterations} \times N_\text{envs} \times h$, where $h$ corresponds to the unroll length for PPO and the horizon length for SHAC and JAVE.
The left plot shows that JAVE with a horizon length of $32$ and PPO perform significantly better than the other FoG methods with a horizon length of $16$.
However, the plot on the right shows that all three FoG methods perform substantially better than PPO in terms of reward gained per environment step\footnote{Also referred to as sample efficiency.}. Among the FoG methods, JAVE with a horizon length of $32$ performs best.
\begin{figure}[t]
	\vspace{1mm}
	\centering
	\includegraphics[width=\linewidth]{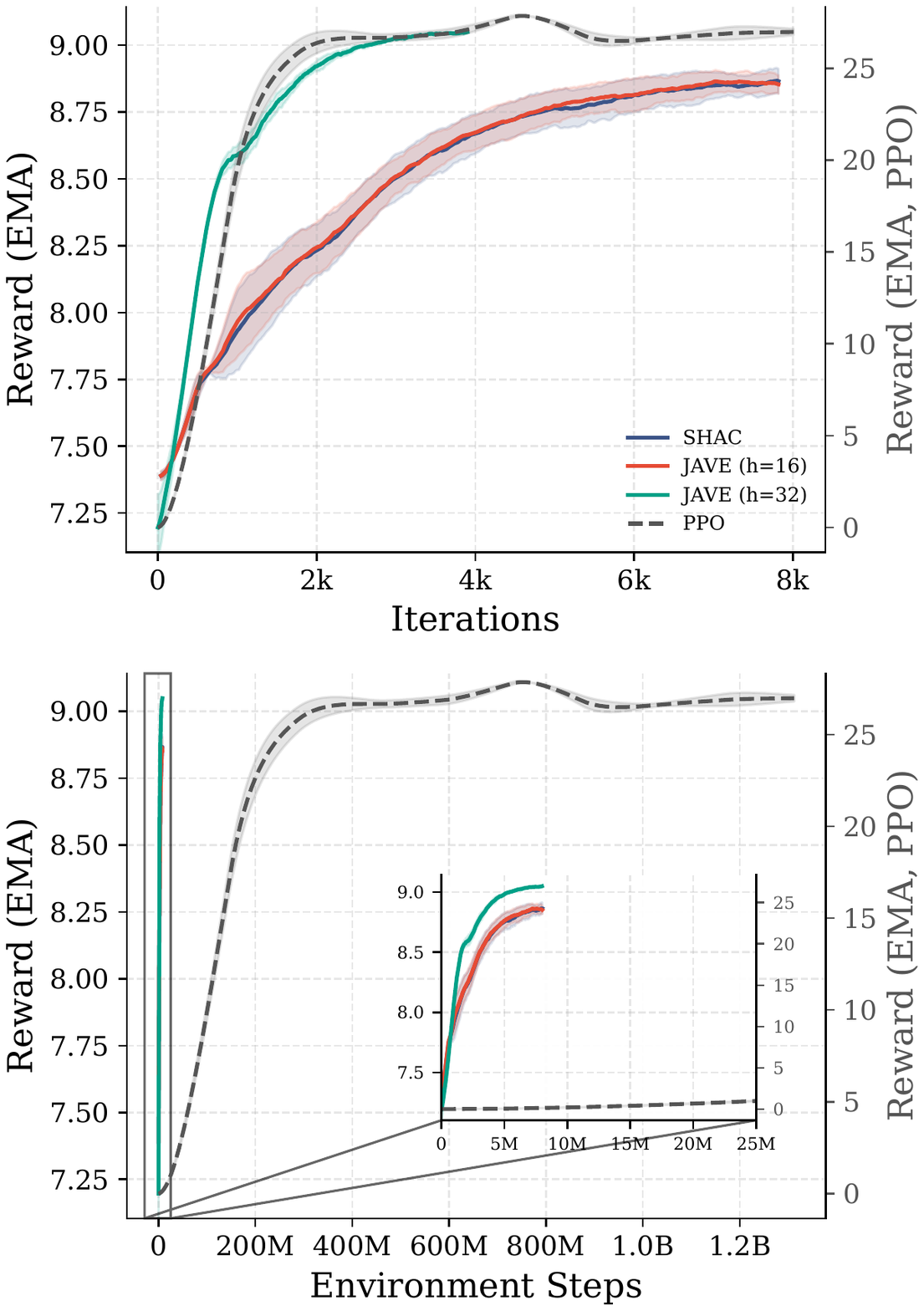}
	\captionof{figure}{Comparison of JAVE, SHAC, and PPO in terms of training exponential moving average (EMA) reward as a function of training iterations (left) and number of training steps across all environments (right). PPO uses its native reward implementation, so absolute reward values are not directly comparable across methods.}
	\label{fig:method-comparison}
\end{figure}

\begin{figure}
	\centering



    \includegraphics[width=\linewidth]{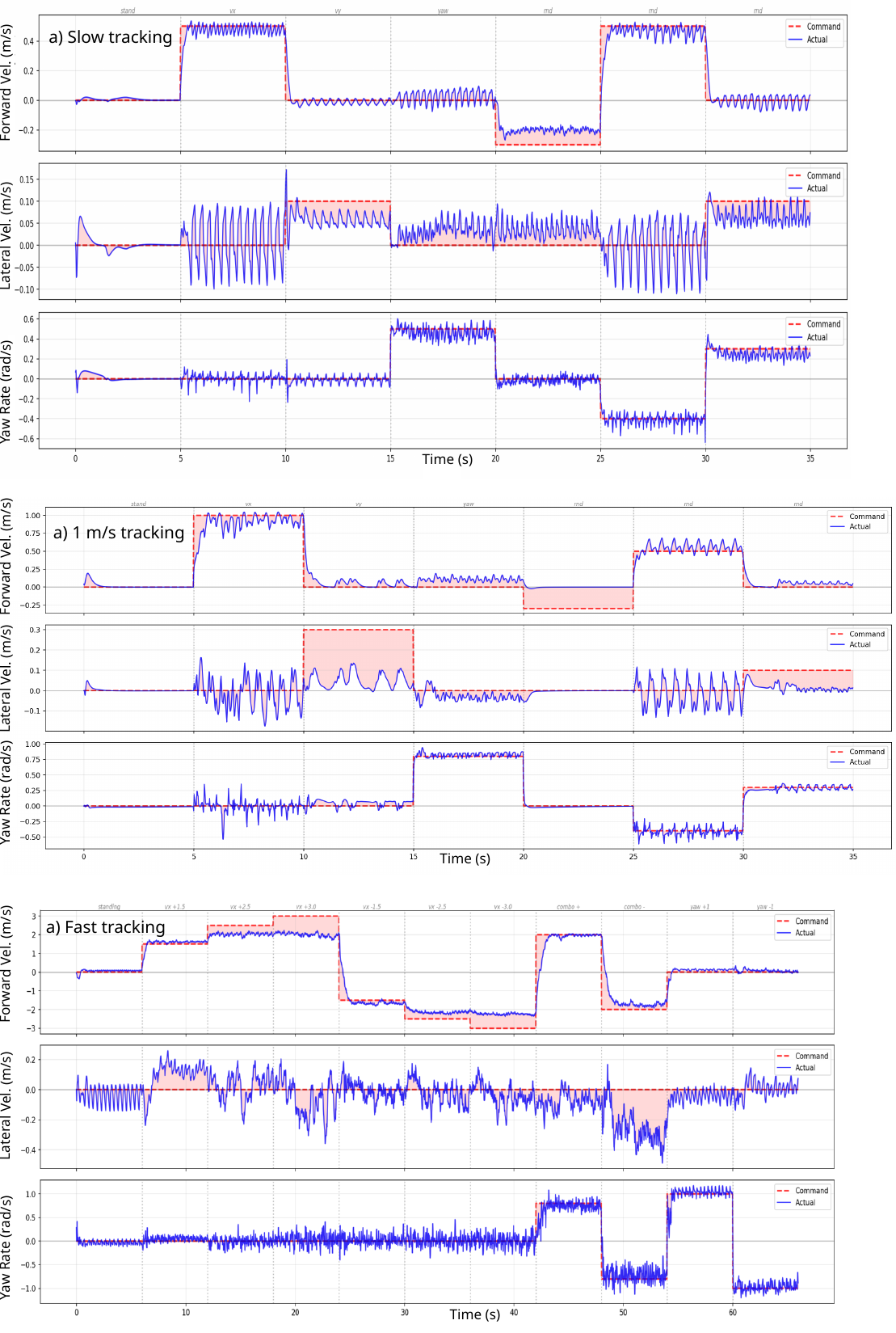}
	\caption{In-simulation velocity tracking evaluation across different speed profiles.}
	\label{fig:velocity_evaluation}
\end{figure}

\autoref{fig:velocity_evaluation} shows the in-simulation velocity-tracking accuracy across three speed profiles. Both the slow profile (a) and the \SI{1}{\meter/\second} profile (b) show excellent tracking accuracy for the forward-velocity and yaw-rate commands. Tracking accuracy for the lateral-velocity command, for which the trained speed range was the smallest, is traded for greater precision in the other two components. For the fast profile (c), the lateral command was set to zero to prioritise high-speed forward motion while maintaining adequate yaw-rate tracking. The plot shows a clear performance plateau because the maximum speed the policy can achieve is \SI{2}{\meter/\second}.

\section{Real-World Experiments}\label{sec:experiment}
The proposed framework provides custom C++ deployment code that runs fully onboard a Unitree Go2 EDU robot equipped with a Jetson AGX Orin (64 GB).
The deployment code relies only on the \texttt{unitree\_sdk2} package, with an optional ROS2 dependency for command input. The code can therefore be used independently or with ROS2.

\begin{figure}[t]
	\centering
	\includegraphics[width=\linewidth]{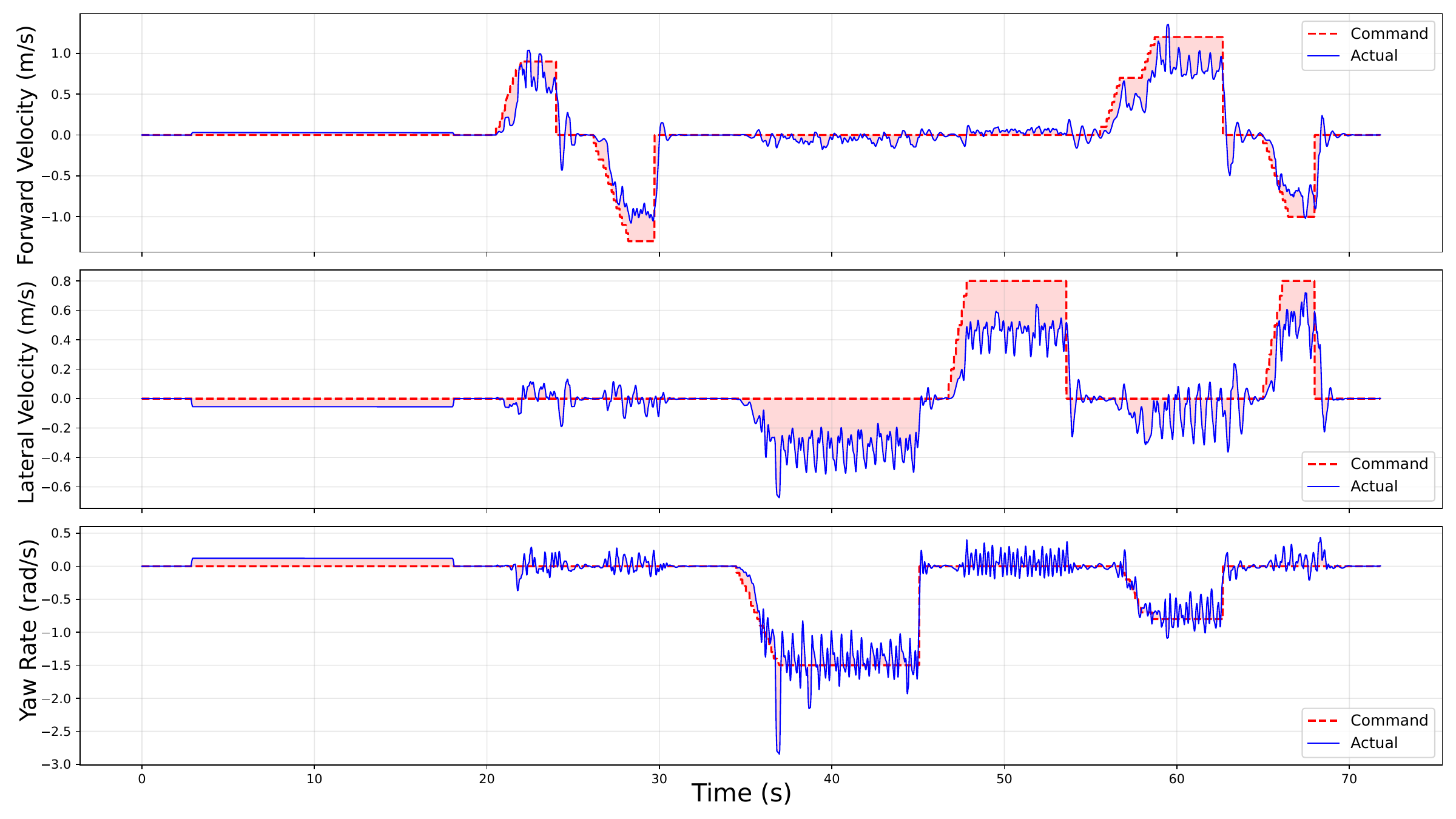}
	\caption{Real-world velocity tracking evaluation.}
	\label{fig:real-eval}
\end{figure}

\autoref{fig:real-eval} shows the velocity-tracking accuracy in the real world. The robot was manually commanded to follow a series of commands involving all three velocity components. The input command was stored as a ROS2 message while a Qualisys camera tracking system recorded the robot's pose. The ground truth velocity was then calculated from the recorded pose using finite differences. While the evaluation shows the tracking is not exact, particularly in the lateral component, the policy follows the commanded velocities with a root-mean-square error below \SI{0.2}{\meter/\second}.
The same policy that achieves speeds above \SI{1}{\meter/\second} during deployment is also capable of withstanding and recovering from strong physical disturbances, as demonstrated by the supplementary videos at \href{https://diffloco.martin-opat.com/}{https://diffloco.martin-opat.com/}.

\section{Discussion and Limitations}
The proposed methodology can discover different gaits, e.g. trotting or galloping, depending on the task requirements, suggesting that it can adapt its locomotion strategy beyond a single fixed behaviour. However, explicit transitions between learned gaits have not yet been investigated and remain a topic for future work. Training for speeds above \SI{2}{\meter/\second} remains difficult, and robust high-speed real-world deployment has not yet been fully achieved. Although FoG methods are approximately $100$x more sample efficient than PPO, as seen in 
\autoref{fig:method-comparison}, their wall-clock training time remains less competitive than that of established frameworks such as RSL-RL~\cite{schwarke2025rslrl}, which provide highly optimised model-free policy-learning algorithms. Additionally, the computational cost per environment step is significantly higher for FoG methods. While model-free algorithms only backpropagate through the neural network, FoG requires analytical backpropagation through the simulator dynamics. Finally, while the methodology can operate with a relatively small number of environments (64--128), scaling to more environments (256--512) remains useful for real-world deployment, since it enables broader domain randomisation.

Compared to SHAC, JAVE improves early training stability by directly supervising the critic Jacobian. This makes it possible to use longer analytical horizon lengths while remaining numerically stable enough to produce deployable policies. However, the improvement is primarily visible in early training and in gradient stability, rather than as a decisive final performance gain. Applying the gradient-supervision idea behind JAVE to more advanced short-horizon methods, such as AHAC~\cite{ahac}, remains a direction for future work. The theoretical conditions under which the Bellman-gradient operator is contractive, as well as practical schedules for restoring the actor dependence in asymmetric actor-critic observations, also require further investigation. See Appendix~\ref{app:jave} for further details.

\section{Conclusion}\label{sec:conclusion}
This work presents Open-DiffLoco, an open-source framework for training deployable blind quadruped locomotion policies with differentiable simulation in MJX. SHAC has been shown to be capable of training robust policies with a simplified differentiable reward set, without kinematic reference trajectories and without privileged base linear velocity in the actor observations. The resulting policies transfer to hardware, track omnidirectional velocity commands, and remain robust to terrain variation and external disturbances. We also introduced JAVE, a SHAC extension that improves early training stability by supervising critic Jacobians. Together, these results show that differentiable locomotion learning can be made reproducible, deployable, and accessible within a mainstream open simulation stack.

\section*{Acknowledgments}
I would like to express my gratitude to the members of the Intelligent Systems Group for insightful conversations during the development of this framework and for conducting the exploratory PPO experiments, as well as to the University of Groningen, the Bernoulli Institute and the Engineering and Technology Institute Groningen for providing the space and equipment necessary for the real-world experiments.

\bibliographystyle{IEEEtran.bst}
\bibliography{bibliography.bib}

\appendices
\section{Jacobian-Augmented Value Estimation (JAVE)}\label{app:jave}
This appendix provides supplementary information to the derivation of the JAVE objective introduced in Section~\ref{subsec: JAVE}.

Recall that $\mathbf o^c$ and $\mathbf o^a$ denote the critic and actor observations, respectively; $\pi$ is the policy; and $\hat{\mathcal G}_\beta$ is the estimated one-step observation map.
The frozen-action construction introduces bias into the target. In the asymmetric-observation setting, freezing the sampled action $\bar{\mathbf a}_t = \mathrm{sg}\{\mathbf a_t\}$ removes the actor-coupling contribution to the closed-loop Jacobian, mediated by the policy derivative $\partial\pi/\partial\mathbf o$. This leaves only the frozen-action Jacobian $\mathbf J_{\mathrm{free}}=\partial\hat{\mathcal G}_\beta/\partial\mathbf o^c$ in the target construction in~\eqref{eq:app-gb-target-wm}. A differentiable ``bridge'' from critic observations $\mathbf o^c$ to actor-compatible observations $\mathbf o^a$ can reduce this bias. Expressing the actor observation as a differentiable function of $\mathbf o^c$ restores a differentiable dependence of the action on the critic observation and hence recovers the feedback term:
\begin{align}
	\tilde{\mathbf o}^a_\eta
	 & = (1-\rho(\eta))\,\mathrm{sg}\{\mathbf o^a\}
	+ \rho(\eta)\,P(\mathbf o^c),                   \\
	\rho(\eta)
	 & = 3\eta^2 - 2\eta^3,\qquad \eta\in[0,1],
	\label{eq:app-smooth-obs-transition}
\end{align}
where $P:\mathcal O_c\to\mathcal O_a$ is a differentiable reconstruction of the actor-compatible observation components. For actor-observation components that are functions of $\mathbf o^c$, this reconstruction is exact, and the corresponding bias is removed. Components of the actor observation $\mathbf o^a$ that are not functions of critic observation $\mathbf o^c$, such as injected observation noise or proprioception history, remain frozen. The smoothstep factor $\rho$ is chosen to have zero slope at both endpoints, interpolating from a fully frozen actor observation ($\eta=0$) to a critic-derived actor observation ($\eta=1$) without introducing a discontinuity. The simplest non-trivial polynomial $\rho$ satisfying these requirements is shown above. With this formulation, the frozen action $\bar{\mathbf a}_t$ is replaced throughout the target construction by $\pi(\tilde{\mathbf o}^a_\eta)$, including in~\eqref{eq:app-wm-closed-loop} and the reward term that enters~\eqref{eq:app-gb-target-wm}. This restores the target's dependence on $\mathbf o^c$. The current implementation corresponds to the fully frozen endpoint ($\eta=0$). Implementing the idea above by experimenting with different schedules for $\eta$ annealing, or by simply setting $\eta=1$, remains future work.

The learned dynamics model is the remaining source of Jacobian error. Fitting $\mathcal F_\psi$ in $L^2$ does not by itself control its Jacobians. Instead, it shifts the gradient-supervision problem from $V_\phi$ to the one-step model. The useful distinction is that the model target is local and can plausibly have bounded curvature. Let $\varepsilon=\hat{\mathcal G}_\beta-\mathcal G_\beta$ on a bounded domain $\Omega\subset\mathbb R^d$ with Lipschitz boundary, and assume $\varepsilon\in H^2(\Omega)$. The Gagliardo--Nirenberg interpolation inequality then gives
\begin{equation}
	\|\nabla\varepsilon\|_{L^2}
	\le
	C\,\|\varepsilon\|_{L^2}^{1/2}\,\|\varepsilon\|_{H^2}^{1/2},
	\qquad C=C(\Omega,d).
	\label{eq:app-gn-ldm}
\end{equation}
The Jacobian error is therefore controlled by the prediction error only in conjunction with the curvature factor $\|\varepsilon\|_{H^2}$, implying that the $L^2$ term alone is insufficient. The bound cannot be strengthened to $\|\nabla\varepsilon\|_{L^2}\le C\|\varepsilon\|_{L^2}$. To see this, the sequence of Proposition~\ref{prop:l2-h1-gap} drives $\|\varepsilon\|_{L^2}\to 0$ while $\|\nabla\varepsilon\|_{L^2}\to\infty$, and so it must be the curvature $\|\varepsilon\|_{H^2}^{1/2}$ that diverges along it. The required curvature bound is thus an assumption on the model and the observation-filtered dynamics, not a consequence of MSE training alone. However, unlike the value-function gradient, this lower-level gap can be closed by supervising the LDM with simulator Jacobians.

In the current implementation, the gradient supervision approach described in this section has been applied to SHAC, which uses a constant analytical horizon. However, the same principle could be applied to adaptive-horizon methods, such as AHAC~\cite{ahac}, which have been shown to outperform SHAC.
Training with the JAVE gradient target converges in practice. Relative to SHAC, JAVE shows improved early convergence across all objectives and increased gradient stability on tasks using the residual formulation of~\cite{bpttMujocoLuo2024}. This permits JAVE to use longer analytical gradient horizons. However, establishing the conditions under which the gradient Bellman operator is guaranteed to contract is left to future work.

Figure~\ref{fig:train_eval} compares SHAC and JAVE. The actor gradients in JAVE are more stable than those in SHAC, resulting in better overall convergence and lower tracking errors. This performance gain is especially visible in early training.
\begin{figure}[tb]
	\centering
	\begin{subfigure}{\linewidth}
		\includegraphics[width=\linewidth]{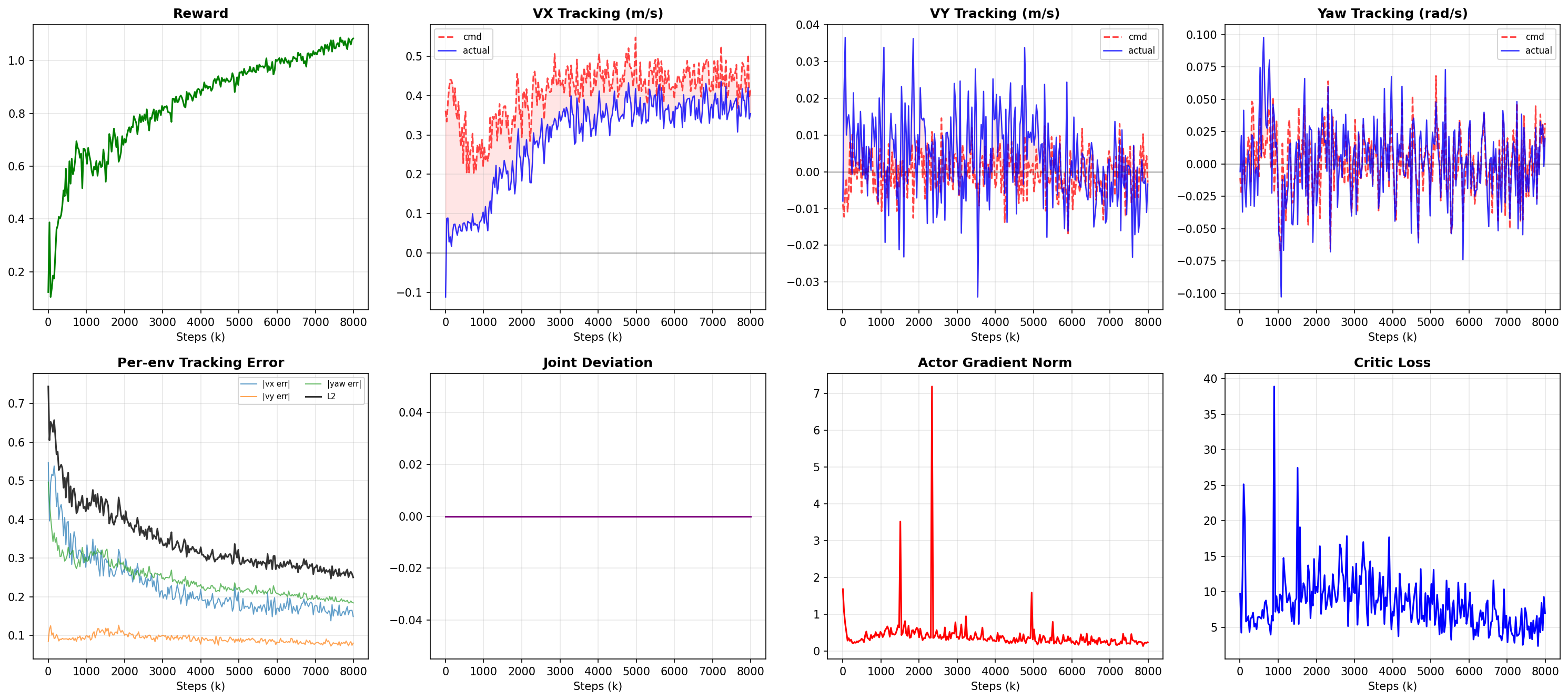}
		\caption{SHAC}
		\label{fig:train_shac}
	\end{subfigure}

	\begin{subfigure}{\linewidth}
		\includegraphics[width=\linewidth]{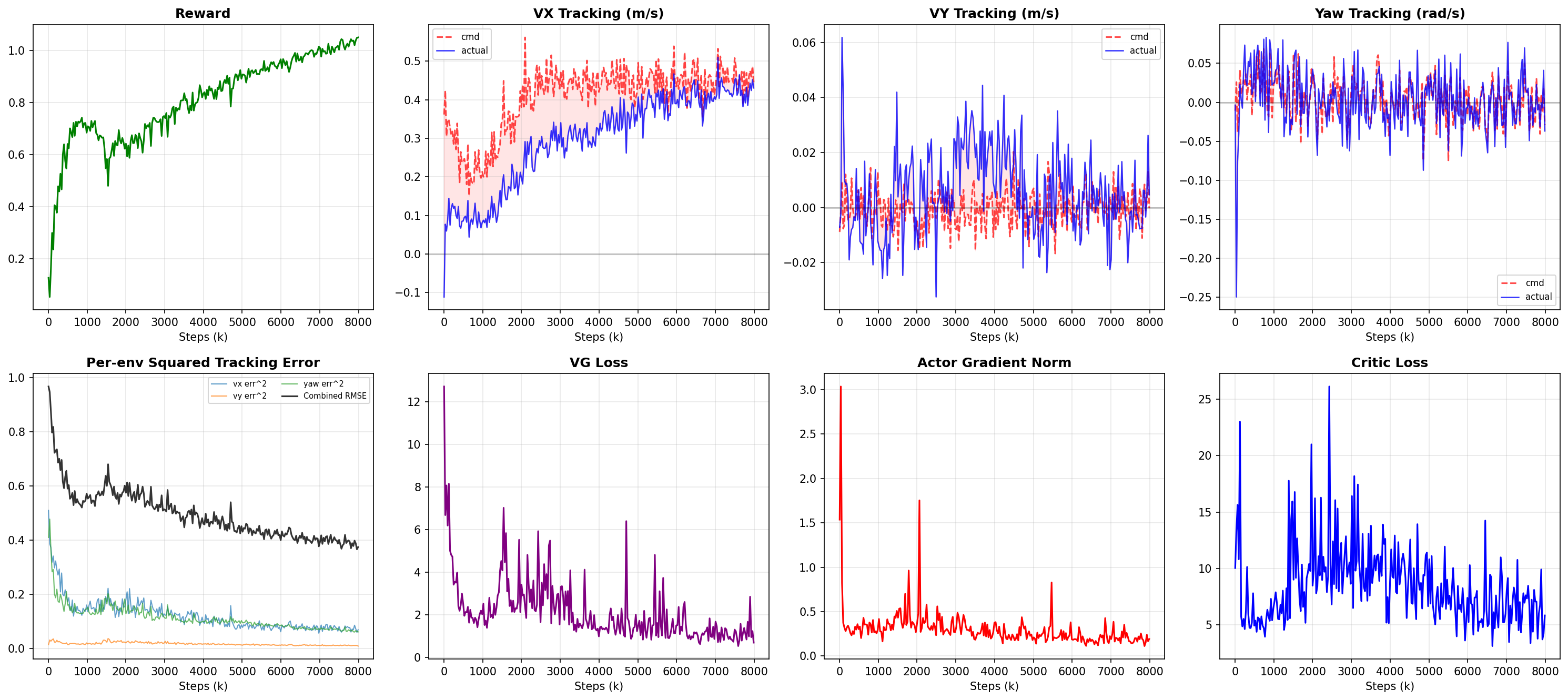}
		\caption{JAVE}
		\label{fig:jave}
	\end{subfigure}

	\caption{Development of several evaluation metrics during SHAC and JAVE training.}
	\label{fig:train_eval}
\end{figure}

\section{Proof of Unboundedness of Gradient Error Under \texorpdfstring{$L^2$}{L2}-bounded Value Error} \label{appendix:grad-unbound-proof}

\begin{proposition}
	\label{prop:l2-h1-gap}
	Let $\Omega \subset \mathbb R^d$ be a bounded set with non-empty interior, and let $\|\cdot\|$ be the $L^2$ norm. There exists a sequence $\{\varepsilon_n\} \subset C^\infty(\Omega),~ n \in \mathbb N$ such that
	\begin{equation}
		\lim_{n \to \infty} \|\varepsilon_n\| = 0 \qquad \text{and} \qquad \lim_{n \to \infty} \|\nabla \varepsilon_n\| = \infty\,.
	\end{equation}
	Consequently, no constant $C < \infty$ satisfies $\|\nabla \varepsilon\| \leq C\,\|\varepsilon\|$ for all $\varepsilon \in C^\infty(\Omega)$.
\end{proposition}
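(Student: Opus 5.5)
The plan is to build an explicit oscillating sequence whose amplitude decays while its frequency grows faster than the amplitude shrinks. Since $\Omega$ has non-empty interior, it contains a closed ball $\bar B(x_0,r)$ with $r>0$. I would fix a bump function $\psi\in C_c^\infty(B(x_0,r))$ with $\psi\not\equiv 0$, say $0\le\psi\le 1$ and $\psi\equiv 1$ on $B(x_0,r/2)$. Then I set
\begin{equation}
	\varepsilon_n(x) = n^{-1/2}\,\psi(x)\,\sin\!\big(n^2 x_1\big),
\end{equation}
where $x_1$ is the first coordinate. Each $\varepsilon_n$ is smooth on $\mathbb R^d$, so its restriction lies in $C^\infty(\Omega)$. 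Outside the ball it vanishes, so all norms reduce to integrals over $B(x_0,r)$, which has finite measure $|B|$.

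The first limit is immediate. Since $|\varepsilon_n|\le n^{-1/2}$, we get $\|\varepsilon_n\|\le n^{-1/2}|B|^{1/2}\to 0$.

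For the gradient, the product rule gives
\begin{equation}
	\partial_1\varepsilon_n = n^{3/2}\psi\cos(n^2x_1) + n^{-1/2}(\partial_1\psi)\sin(n^2x_1).
\end{equation}
The second term has $L^2$ norm at most $n^{-1/2}\|\partial_1\psi\|$, which tends to $0$. For the first term, I restrict to the cube $Q\subset B(x_0,r/2)$ of side $s>0$, on which $\psi=1$, giving
\begin{equation}
	\|n^{3/2}\psi\cos(n^2x_1)\|^2\ \ge\ n^3\int_Q\cos^2(n^2x_1)\,dx = n^3 s^{d-1}\Big(\tfrac{s}{2}+\tfrac{\sin(2n^2(a+s))-\sin(2n^2a)}{4n^2}\Big),
\end{equation}
where $a$ is the left endpoint of $Q$ in the first coordinate. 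The bracket is at least $s/2-1/(2n^2)\ge s/4$ for large $n$, so this norm is at least $c\,n^{3/2}$ with $c>0$. The reverse triangle inequality then gives $\|\nabla\varepsilon_n\|\ge\|\partial_1\varepsilon_n\|\ge c\,n^{3/2}-O(n^{-1/2})\to\infty$.

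The consequence follows directly. If some $C<\infty$ satisfied $\|\nabla\varepsilon\|\le C\|\varepsilon\|$ for all $\varepsilon\in C^\infty(\Omega)$, applying it to $\varepsilon_n$ would bound the left side by $C n^{-1/2}|B|^{1/2}\to0$, contradicting divergence to $\infty$.

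The only step needing care is the lower bound on the oscillatory integral. This is why I localize to a cube where $\psi\equiv1$: there the integral factorizes, and the average of $\cos^2$ is explicitly $1/2+O(n^{-2})$. Everything else is routine. The choice of exponents ($n^{-1/2}$ amplitude, $n^2$ frequency) just needs amplitude$\,\to0$ and amplitude$\times$frequency$\,\to\infty$.
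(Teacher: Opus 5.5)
Your proof is correct and uses the same construction as the paper: a sine whose amplitude tends to $0$ while amplitude times frequency diverges. The paper takes $\varepsilon_n(s)=\frac1n\sin(n^2 s)$ on $[0,2\pi]$, computes $\|\varepsilon_n\|=\sqrt{\pi}/n$ and $\|\varepsilon_n'\|=\sqrt{\pi}\,n$ exactly, and only asserts that the general case is trivial (its interval $[0,2\pi]$ need not even lie in $\Omega$), whereas your bump function on a ball inside $\Omega$ and lower bound on a cube where $\psi\equiv1$ actually carry out that reduction for a general $\Omega\subset\mathbb R^d$ with non-empty interior.
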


\begin{proof}
	Suppose, for contradiction, that some $C < \infty$ satisfied the inequality $\|\nabla \varepsilon\| \leq C\|\varepsilon\|$ universally.
	It suffices to construct a counterexample sequence on $\Omega_0 = [0, 2\pi] \subset \mathbb R$, as the multidimensional case follows trivially.
	Define $\varepsilon_n: [0, 2\pi] \to \mathbb R$ by
	\begin{equation}
		\varepsilon_n(s) = \frac{1}{n}\sin(n^2 s)\,.
	\end{equation}
	Direct computation gives
	\begin{equation}
		\|\varepsilon_n\|^2 = \int_0^{2\pi} \frac{1}{n^2}\sin^2(n^2 s)\,\mathrm ds =  \frac{\pi}{n^2}, \qquad \|\varepsilon_n\| = \frac{\sqrt{\pi}}{n}\,.
	\end{equation}
	Thus, $\lim_{n\to\infty} \|\varepsilon_n\| = 0$.
	Differentiating
	\begin{gather*}
		\varepsilon_n'(s) = n\cos(n^2 s) \\
		\|\varepsilon_n'\|^2 = \int_0^{2\pi} n^2\cos^2(n^2 s)\,\mathrm ds = \pi n^2,\quad \|\varepsilon_n'\| =\sqrt{\pi}n\,.
	\end{gather*}
	Therefore, $\lim_{n\to\infty} \|\varepsilon_n'\|=\infty$.
	The chosen $\varepsilon_n$ therefore satisfies $\lim_{n \to \infty} \|\varepsilon_n\| = 0 \text{ and } \lim_{n \to \infty} \|\nabla \varepsilon_n\| = \infty$.
	Applying the inequality $\|\nabla \varepsilon\| \leq C\|\varepsilon\|$ to $\varepsilon_n$ yields $\sqrt{\pi}\,n \leq C\frac{\sqrt{\pi}}{n}$, i.e.\ $n^2 \leq C$, which fails for $n^2 > C$ and contradicts the inequality holding universally. Hence, no such constant $C$ exists, and the proof is complete.
\end{proof}

\section{Implicit Terrain}\label{app:implicit-terrain}

In this work, terrain randomisation is used implicitly to expose the policy to uneven surfaces without introducing a mesh height field. Mesh-grid terrain is slower than a flat plane, harder to keep differentiable in contact-rich MJX rollouts, and its exact geometry is unnecessary for blind locomotion, since the policy observes the resulting contact and body dynamics rather than the terrain itself.

The current implicit terrain model has two components. First, slopes are represented by a per-episode tilted gravity vector applied to the robot's centre of mass. For slope angle $\theta$ and uniformly sampled azimuth $\psi$,
\begin{equation}
	\mathbf g_{\mathrm{slope}}
	=
	g
	\begin{bmatrix}
		-\sin\theta\cos\psi &
		-\sin\theta\sin\psi &
		-\cos\theta
	\end{bmatrix}^{\top}.
\end{equation}
This produces slope-like accelerations while keeping the contact geometry flat.

Second, local terrain ``bumps'' are modelled as contact-scaled foot forces based on the differentiated Ornstein--Uhlenbeck (OU) process~\cite{OrigOU, StochasticKleodenPlaten1992}. For each foot $i$, a three-dimensional OU state is updated by
\begin{equation}
	\mathbf u_{i,n+1}
	=
	(1-\gamma)\mathbf u_{i,n}
	+
	\sigma\boldsymbol\varepsilon_{i,n},
	\qquad
	\boldsymbol\varepsilon_{i,n}\sim\mathcal N(\mathbf 0,\mathbf I),
	\label{eq:implicit-terrain-ou}
\end{equation}
and the differentiated process can then be defined as
\begin{equation}
	\Delta \mathbf u_{i,n+1}=\mathbf u_{i,n+1}-\mathbf u_{i,n}.
\end{equation}
The vertical component is clipped to be nonnegative, since foot normal forces should not pull a leg into the ground,
\begin{equation}
	\widetilde{\Delta\mathbf u}_{i,n+1}
	=
	\begin{bmatrix}
		\Delta u^x_{i,n+1} &
		\Delta u^y_{i,n+1} &
		\max(\Delta u^z_{i,n+1},0)
	\end{bmatrix}^{\top}\,.
\end{equation}
The final force applied to the foot body $i$ is
\begin{equation}
	\mathbf F_{i,n+1}
	=
	\frac{f_{i,n}}{\max(W,\varepsilon)}
	\widetilde{\Delta\mathbf u}_{i,n+1},
	\label{eq:implicit-terrain-force}
\end{equation}
where $f_{i,n}$ is the normal contact force from the simulator, $W$ is the robot weight, and $\varepsilon$ is a small numerical constant. Scaling by $f_{i,n}$ makes the disturbance active primarily during contact.

A practical illustration of the terrain generation described above at a single time step is available at \href{https://diffloco.martin-opat.com/terrain-vis/}{https://diffloco.martin-opat.com/terrain-vis/}.

\section{Hyperparameters}
\begin{table}[H]
	\centering
	\caption{SHAC and JAVE training hyperparameters. For more details, see \href{https://github.com/MartinOpat/open-diffloco}{https://github.com/MartinOpat/open-diffloco}.}
	\begin{tabular}{|l|l|}
		\hline
		\textbf{Parameter}           & \textbf{Value}              \\
		\hline
		\multicolumn{2}{|l|}{\textit{General}}                     \\
		\hline
		Total environment steps      & 8\,000\,000 -- 16\,000\,000 \\
		Parallel environments $N$    & 64 -- 256                   \\
		Horizon length $h$           & 16 -- 32                    \\
		Discount $\gamma$            & 0.99                        \\
		GAE $\lambda$                & 0.95                        \\
		\hline
		\multicolumn{2}{|l|}{\textit{Actor}}                       \\
		\hline
		Learning rate                & $5\times10^{-3}$            \\
		Action noise $\sigma$        & 0.5                         \\
		Action scale                 & 0.5                         \\
		\hline
		\multicolumn{2}{|l|}{\textit{Critic}}                      \\
		\hline
		Learning rate                & $5\times10^{-4}$            \\
		Critic updates per iteration & 16                          \\
		Target network update rate   & 0.01                        \\
		\hline
	\end{tabular}
\end{table}

\end{document}